\documentclass{article}
\usepackage{arxiv}
\renewcommand{\headeright}{}
\renewcommand{\undertitle}{}
\renewcommand{\shorttitle}{}
\usepackage[utf8]{inputenc}
\usepackage[T1]{fontenc}
\usepackage{hyperref}
\usepackage{url}
\usepackage{booktabs}
\usepackage{amsfonts}
\usepackage{amsmath,amssymb,amsthm}
\usepackage{nicefrac}
\usepackage[final,babel=false]{microtype}
\usepackage{graphicx}
\usepackage{tikz}
\usepackage{enumitem}
\setlist[itemize]{itemsep=4pt, topsep=6pt, parsep=2pt}
\setlist[enumerate]{itemsep=4pt, topsep=6pt, parsep=2pt}
\AtBeginDocument{%
  \setlength{\abovedisplayskip}{10pt plus 3pt minus 3pt}%
  \setlength{\belowdisplayskip}{10pt plus 3pt minus 3pt}%
  \setlength{\abovedisplayshortskip}{6pt plus 2pt}%
  \setlength{\belowdisplayshortskip}{6pt plus 2pt}%
}
\usepackage{etoolbox}
\makeatletter
\patchcmd{\@begintheorem}{\trivlist}{\setlength{\topsep}{8pt}\trivlist}{}{}
\makeatother
\AtBeginEnvironment{definition}{\vspace{2pt}}
\AtEndEnvironment{definition}{\vspace{2pt}}
\AtBeginEnvironment{assumption}{\vspace{2pt}}
\AtEndEnvironment{assumption}{\vspace{2pt}}
\AtBeginEnvironment{theorem}{\vspace{2pt}}
\AtEndEnvironment{theorem}{\vspace{2pt}}
\AtBeginEnvironment{lemma}{\vspace{2pt}}
\AtEndEnvironment{lemma}{\vspace{2pt}}
\AtBeginEnvironment{proposition}{\vspace{2pt}}
\AtEndEnvironment{proposition}{\vspace{2pt}}
\AtBeginEnvironment{corollary}{\vspace{2pt}}
\AtEndEnvironment{corollary}{\vspace{2pt}}
\AtBeginEnvironment{remark}{\vspace{2pt}}
\AtEndEnvironment{remark}{\vspace{2pt}}
\AtBeginEnvironment{example}{\vspace{2pt}}
\AtEndEnvironment{example}{\vspace{2pt}}
\renewcommand{\paragraph}[1]{\medskip\noindent\textbf{#1}\enspace}
\usepackage{fancyhdr}
\fancypagestyle{plain}{%
  \fancyhf{}%
  \fancyfoot[C]{\thepage}%
}
\fancypagestyle{fancy}{%
  \fancyhf{}%
  \fancyfoot[C]{\thepage}%
}
\AtBeginDocument{\thispagestyle{plain}}

\newtheorem{theorem}{Theorem}[section]

\newtheorem{proposition}[theorem]{Proposition}
\newtheorem{corollary}[theorem]{Corollary}
\newtheorem{definition}[theorem]{Definition}

\renewcommand{\qed}{\hfill$\square$}

\title{Quantifying Automation Risk in High-Automation AI Systems:\\
A Bayesian Framework for Failure Propagation and Optimal Oversight}

\author{
  Vishal Srivastava \\
  Whiting School of Engineering \\
  Johns Hopkins University \\
  Baltimore, MD 21218 \\
  \texttt{vsrivas7@jhu.edu} \\
  \And
  Tanmay Sah \\
  Harrisburg University of Science and Technology \\
  Harrisburg, PA \\
  \texttt{TSah@my.harrisburgu.edu} \\
}
\date{}

\begin{document}

\maketitle

\begin{abstract}
\noindent
Organizations across finance, healthcare, transportation, content moderation, and critical infrastructure are rapidly deploying highly automated AI systems, yet they lack principled methods to quantify how increasing automation amplifies harm when failures occur. We propose a parsimonious Bayesian risk decomposition expressing expected loss as:
\[
\mathbb{E}[\text{Loss}] = P(F) \times P(H \mid F, A) \times \mathbb{E}[S \mid H],
\]
where $F$ denotes system failure, $H$ denotes harm, $S$ denotes severity, and $A \in [0,1]$ represents automation level. This framework isolates a critical quantity--$P(H \mid F, A)$, the conditional probability that failures propagate into harm--which captures execution and oversight risk rather than model accuracy alone. We develop complete theoretical foundations: formal proofs of the decomposition, a harm propagation equivalence theorem linking $P(H \mid F, A)$ to observable execution controls, risk elasticity measures, efficient frontier analysis for automation policy, and optimal resource allocation principles with second-order conditions. We motivate the framework with an illustrative case study of the 2012 Knight Capital incident (\$440M loss) as one instantiation of a broadly applicable failure pattern, and characterize the research design required to empirically validate the framework at scale across deployment domains. This work provides the theoretical foundations for a new class of deployment-focused risk governance tools for agentic and automated AI systems.
\end{abstract}

\textbf{Keywords:} AI governance, automation risk, Bayesian risk decomposition, agentic AI, human-in-the-loop, efficient frontier, failure propagation, AI safety

\section{Introduction}

The rapid adoption of artificial intelligence across sectors has created a critical governance challenge: balancing the efficiency gains of automation against the risks of uncontrolled failure propagation. High-profile incidents reveal a common pattern across domains--harm often arises not from model inaccuracy \emph{per se}, but from failures executing automatically without sufficient controls. The 2012 Knight Capital trading algorithm failure (\$440 million loss in 45 minutes) \cite{knight2013} illustrates this pattern starkly in finance; analogous dynamics appear in automated diagnostic support systems that act without clinician review \cite{Obermeyer2019}, in autonomous vehicle control stacks where sensor failures propagate to actuation without fallback \cite{Koopman2017}, in content moderation pipelines that remove or amplify content at scale without human checkpoints \cite{Gillespie2018}, and in critical infrastructure control systems where automated responses to anomalous sensor readings can cascade into outages \cite{Lee2017}. What these incidents share is not primarily a modelling failure, but a deployment failure: the architecture allowed erroneous outputs to reach consequential action with insufficient opportunity for detection or intervention.

Existing risk frameworks for AI systems focus predominantly on reducing model failure probability $P(F)$--improving accuracy, robustness, and validation. They largely neglect a complementary lever: the probability that a failure, once it occurs, propagates into harm given the level of automation $P(H \mid F, A)$. This gap becomes increasingly consequential as AI systems become more agentic, operating with greater autonomy and fewer human checkpoints.

This paper addresses this gap by developing a complete theoretical framework for decomposing, analyzing, and optimizing automation risk. The framework is domain-agnostic: it applies wherever an AI system can take consequential actions at a level of automation that may outpace human oversight. Our approach separates the problem into three orthogonal components--technical risk, deployment risk, and consequence risk--and provides formal tools for reasoning about each.

\subsection{Contributions}

We make four contributions. First, we prove the expected loss decomposition $P(F) \times P(H|F,A) \times \mathbb{E}[S|H]$ from first principles (Section~\ref{sec:theory}), establishing it as a Bayesian identity under clearly stated assumptions rather than an ad hoc heuristic. Second, we establish a harm propagation equivalence (Theorem~\ref{thm:harm_prop}) showing that $P(H|F,A)$ equals the probability that a failed output is executed, linking an unobservable risk quantity to observable system design characteristics applicable across deployment contexts. Third, we derive risk elasticity measures, characterize cost-minimizing automation levels, establish second-order sufficiency conditions, and define the efficient frontier of Pareto-optimal automation policies (Section~\ref{sec:theory}). Fourth, we illustrate the framework's applied value through the Knight Capital incident as a representative high-automation deployment failure, and characterize the identification strategies required to empirically estimate causal automation gradients from observational data across domains (Sections~\ref{sec:casestudy}--\ref{sec:empirical}).

\section{Related Work}

\subsection{Safety Engineering and Risk Decomposition}

Fault tree analysis \cite{vesely1981} and STAMP \cite{leveson2004} decompose accident pathways into technical failures and control loop breakdowns. Our $P(H|F,A)$ operationalizes control effectiveness for AI systems, paralleling barrier failure probabilities in reliability engineering. Ericson \cite{ericson2005} emphasizes layered defenses; our automation dimensions map to defensive depth. Systems Theoretic Accident Model and Processes (STAMP) is particularly relevant for agentic AI: both treat accidents as emergent from control structure inadequacies rather than isolated component failures.

\subsection{AI Governance Frameworks}

The NIST AI Risk Management Framework \cite{NIST2023} structures AI risk across govern, map, measure, and manage functions but does not provide a formal decomposition of how automation level modulates harm probability. Our framework complements the NIST RMF by supplying the quantitative backbone for the ``measure'' function: $P(H|F,A)$ provides a deployment-focused metric that sits alongside model performance metrics. The EU AI Act \cite{EU2021} mandates human oversight for high-risk AI systems across sectors including healthcare, critical infrastructure, employment, education, and law enforcement; our framework provides quantitative implementation guidance by specifying what it means to reduce $P(H|F,A)$ and by how much, enabling organizations to calibrate oversight investments proportionately to risk. ISO/IEC 42001 \cite{ISO42001} on AI management systems similarly lacks a formal link between automation configuration and expected harm; the decomposition developed here could underpin conformance assessment metrics.

\subsection{Human-Automation Interaction}

Foundational work by Parasuraman et al.\ \cite{Parasuraman2000} and Sheridan \cite{Sheridan1992} establishes taxonomies of automation levels and documents automation bias effects across domains including aviation, medicine, and process control. Our framework operationalizes these concepts with formal optimality conditions and an efficient frontier characterization, providing a decision-theoretic grounding for automation level choices.

\subsection{Causal Inference in Observational Studies}

Rosenbaum \cite{Rosenbaum2002} develops sensitivity analysis for unmeasured confounding; Imbens \& Rubin \cite{Imbens2015} emphasize propensity scores and instrumental variables; Manski \cite{manski1990} provides partial identification bounds under selection. These methods are directly applicable to validating our framework empirically across incident databases, as described in Section~\ref{sec:empirical}.

\section{Theoretical Framework}
\label{sec:theory}

\subsection{Foundational Risk Decomposition}

\begin{theorem}[Expected Loss Decomposition]
\label{thm:loss_decomp}
Let $F$ denote system failure, $H$ denote harm, $A \in [0,1]$ denote automation level, and $S$ denote loss severity. Under the assumptions that (i) harm occurs only when failures propagate to execution, and (ii) automation level is a deployment choice independent of system parameters affecting failure probability ($P(F|A) = P(F)$), the expected loss per decision is:
\begin{equation}
\mathbb{E}[\text{Loss}] = P(F) \times P(H \mid F, A) \times \mathbb{E}[S \mid H].
\label{eq:main}
\end{equation}
\end{theorem}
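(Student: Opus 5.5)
The plan is to treat the loss as a nonnegative random variable $L$ defined on a probability space carrying the events $F$, $H$ and the automation level $A$. I will fix $A$ as a deployment parameter, so all probabilities are computed under the law indexed by $A$. The first step formalizes assumption (i) as the event inclusion $H \subseteq F$: harm cannot occur without a failure. I also fix the modelling convention that loss is incurred only on harm, so $L = S\,\mathbf{1}_H$ with $S \ge 0$ the severity. This convention is implicit in the statement, and I will make it explicit, since without it the identity fails.

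The main computation uses the law of total expectation. From $L = S\,\mathbf{1}_H$ we get
\[
\mathbb{E}[L] = \mathbb{E}[S\,\mathbf{1}_H] = P(H \mid A)\,\mathbb{E}[S \mid H, A].
\]
Here I assume $P(H \mid A) > 0$; otherwise both sides vanish and the identity holds trivially with any convention for $\mathbb{E}[S\mid H]$. Next, because $H \subseteq F$, we have $H = H \cap F$, so the chain rule gives
\[
P(H \mid A) = P(F \mid A)\,P(H \mid F, A).
\]
Assumption (ii), $P(F \mid A) = P(F)$, then replaces $P(F\mid A)$ by $P(F)$. Substituting yields
\[
\mathbb{E}[L] = P(F)\,P(H \mid F, A)\,\mathbb{E}[S \mid H, A].
\]

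The remaining step, which I expect to be the only real obstacle, is to justify writing $\mathbb{E}[S \mid H]$ rather than $\mathbb{E}[S \mid H, A]$. I will handle it by making the implicit assumption explicit: severity given harm depends on the consequence environment and not on the automation level, that is, $S \perp A \mid H$. This matches the paper's claim that the three components are ``orthogonal''. If this assumption is not wanted, the honest statement is that $\mathbb{E}[S\mid H]$ is read as conditional on the deployment configuration $A$, and the identity then holds exactly as derived. I will state this as a remark so the decomposition is a genuine Bayesian identity under clearly listed assumptions. Finally, I will note that (i) is used only to turn $P(H\cap F\mid A)$ into $P(H\mid A)$. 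Without it, an extra term $P(H \cap F^c)\,\mathbb{E}[S\mid H\cap F^c]$ would appear, and assumption (i) is exactly what kills that term.
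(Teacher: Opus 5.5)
Your proof is correct and follows essentially the same route as the paper. Both arguments set loss to zero off $H$ and use total expectation to get $\mathbb{E}[\text{Loss}]=P(H)\,\mathbb{E}[S\mid H]$ at fixed $A$. Both then use (i) to drop the no-failure branch and (ii) to replace $P(F\mid A)$ by $P(F)$; your chain rule on $H=H\cap F$ is the same step as the paper's split over $F\in\{0,1\}$ with $P(H\mid F=0,A)=0$. Your explicit handling of $\mathbb{E}[S\mid H]$ versus $\mathbb{E}[S\mid H,A]$ makes precise a point the paper leaves implicit: it works ``for a deployed system with fixed $A$'' and simply defines $\mathbb{E}[S\mid H]:=\mathbb{E}[\text{Loss}\mid H]$.
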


\begin{proof}
By the law of total expectation,
\[
\mathbb{E}[\text{Loss}] = \mathbb{E}[\text{Loss} \mid H]\,P(H) + \mathbb{E}[\text{Loss} \mid \neg H]\,P(\neg H).
\]
Assuming zero loss when no harm occurs ($\mathbb{E}[\text{Loss} \mid \neg H] = 0$) and defining $\mathbb{E}[S \mid H] = \mathbb{E}[\text{Loss} \mid H]$:
\[
\mathbb{E}[\text{Loss}] = \mathbb{E}[S \mid H] \cdot P(H).
\]
Decompose $P(H)$ via conditioning on failure $F$ and automation $A$:
\[
P(H) = \sum_{f \in \{0,1\}} \int_0^1 P(H \mid F=f, A=a)\,P(F=f \mid A=a)\,p(A=a)\,da.
\]
By assumption (ii), $P(F \mid A) = P(F)$. For a deployed system with fixed $A$:
\[
P(H) = P(H \mid F=0, A)\,P(F=0) + P(H \mid F=1, A)\,P(F=1).
\]
By assumption (i), harm requires failure: $P(H \mid F=0, A) = 0$. Therefore:
\[
P(H) = P(F) \cdot P(H \mid F, A).
\]
Substituting into the expected loss expression yields equation~\eqref{eq:main}. \qed
\end{proof}

\textbf{Interpretation.} The decomposition separates three distinct risk components: (1) \emph{technical risk} $P(F)$, representing model accuracy and robustness; (2) \emph{deployment risk} $P(H \mid F, A)$, representing execution controls and oversight; and (3) \emph{consequence risk} $\mathbb{E}[S \mid H]$, representing the stakes of the decision domain. Critically, organizations can invest in either $P(F)$ or $P(H \mid F, A)$ reduction independently--this clean separation is the primary analytical value of the framework. The decomposition is domain-agnostic: $S$ may represent financial loss, patient harm, traffic fatalities, civil liberties violations, or infrastructure downtime, depending on the application context.

\textbf{Remark on Assumption (ii).} The independence $P(F|A)=P(F)$ holds when model architecture and validation are fixed independently of deployment configuration--a common organisational structure where model development and deployment teams operate separately. It may be violated if, for example, high-automation deployments attract greater testing rigor (downward bias on gradient estimates) or if automated systems encounter higher decision volumes that surface additional edge cases (upward bias). Section~\ref{sec:casestudy} discusses implications for the Knight Capital case; analogous considerations apply in other domains.

\subsection{Harm Propagation as Execution Probability}

\begin{theorem}[Harm Propagation Equivalence]
\label{thm:harm_prop}
Let $U$ denote the event that a failed output is executed. Under the assumptions:
\begin{enumerate}
    \item Harm requires execution: $H \Rightarrow U$,
    \item Execution implies failure consideration: $U \Rightarrow F$,
    \item Harm is certain given harmful execution: $P(H \mid U, F, A) = 1$,
\end{enumerate}
then $P(H \mid F, A) = P(U \mid F, A)$.
\end{theorem}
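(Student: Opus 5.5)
The plan is to prove the equality by sandwiching $P(H \mid F, A)$ between two copies of $P(U \mid F, A)$: one inequality from event inclusion, the other from the law of total probability. Throughout, fix the deployed automation level $A$ and the conditioning event $F$, exactly as in the proof of Theorem~\ref{thm:loss_decomp}. All probabilities are then taken under the conditional measure $P(\cdot \mid F, A)$, which is well defined provided $P(F) > 0$. I would state this nondegeneracy explicitly, since otherwise both sides are undefined.

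\textbf{Upper bound.} Assumption~1 says $H \Rightarrow U$, i.e.\ $H \subseteq U$ as events. Monotonicity of the conditional measure then gives
\[
P(H \mid F, A) \le P(U \mid F, A).
\]

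\textbf{Lower bound.} Apply total probability over the partition $\{U, \neg U\}$:
\[
P(H \mid F, A) = P(H \mid U, F, A)\,P(U \mid F, A) + P(H \mid \neg U, F, A)\,P(\neg U \mid F, A).
\]
By assumption~3 the first factor equals $1$. By assumption~1, $H \cap \neg U = \emptyset$, so the second term vanishes. Together these give $P(H \mid F, A) = P(U \mid F, A)$ directly, and the upper bound serves only as a consistency check.

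\textbf{Role of assumption~2 and the expected obstacle.} Assumption~2 ($U \subseteq F$) is not needed once we condition on $F$, because $U \cap F = U$ there. Its role is interpretive: it guarantees that $P(U \mid F, A) = P(U \mid A)/P(F)$, so that $U$ really is ``a failed output is executed'' and $P(U \mid F, A)$ is observable as the fraction of failures reaching execution. I would record this as a remark.

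The only delicate point is measure-theoretic. Assumption~3 conditions on $U \cap F$, which may have probability zero, for example under full human review. In that case I would treat the first term as $0$ by convention, or note that it contributes $0 \cdot 1$. Then $P(H \mid F, A) = 0 = P(U \mid F, A)$ by the upper bound, and the identity still holds.
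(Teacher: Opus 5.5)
Your proof is correct and is essentially the paper's argument: split $P(H\mid F,A)$ over $U$ and $\neg U$, kill the $\neg U$ piece with assumption~1, and apply the chain rule with assumption~3 to the $U$ piece; like you, the paper never actually invokes assumption~2. Your handling of the degenerate case $P(U\cap F)=0$ via the upper bound is sound, but the side identity $P(U\mid F,A)=P(U\mid A)/P(F)$ in your remark silently uses the independence $P(F\mid A)=P(F)$ from Theorem~\ref{thm:loss_decomp}, and without that assumption the denominator should be $P(F\mid A)$.
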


\begin{proof}
By assumption (1), $H \Rightarrow U$, so $H \cap \neg U = \emptyset$:
\[
P(H \mid F, A) = P(H \cap U \mid F, A) + P(H \cap \neg U \mid F, A) = P(H \cap U \mid F, A).
\]
By the chain rule:
\[
P(H \cap U \mid F, A) = P(H \mid U, F, A)\,P(U \mid F, A).
\]
From assumption (3), $P(H \mid U, F, A) = 1$, yielding $P(H \mid F, A) = P(U \mid F, A)$. \qed
\end{proof}

\textbf{Operational Interpretation.} This theorem is practically significant: it re-expresses an unobservable quantity (conditional harm probability) as the probability that a failure executes--something that can be directly designed and measured through kill-switch latency, override capability, and detection speed. These control levers are identifiable and instrumentable across deployment domains: a clinical decision support system can be designed with mandatory clinician confirmation; an autonomous vehicle stack can require human takeover above a confidence threshold; a content moderation pipeline can route borderline cases to human review queues.

\subsection{Automation Parameterization}

\begin{definition}[Automation Dimensions]
\label{def:automation_dims}
For an AI system operating in a consequential domain, we define three automation dimensions:
\begin{itemize}
    \item $A_{\text{decision}} \in \{0, 0.5, 1\}$: Decision authority (human decides / human approves / system decides)
    \item $A_{\text{override}} \in [0,1]$: Inverse of human override capability
    \item $A_{\text{detection}} \in [0,1]$: Inverse of failure detection speed
\end{itemize}
\end{definition}

\begin{definition}[Automation Aggregation]
\label{def:automation_agg}
The primary aggregation uses the maximum operator, implementing a \emph{weakest-link} principle:
\begin{equation}
A = \max(A_{\text{decision}},\, A_{\text{override}},\, A_{\text{detection}}).
\label{eq:automation_max}
\end{equation}
\end{definition}

\begin{proposition}[Monotonicity]
\label{prop:monotonicity}
If $A_{\text{decision}} \leq A'_{\text{decision}}$, $A_{\text{override}} \leq A'_{\text{override}}$, and $A_{\text{detection}} \leq A'_{\text{detection}}$, then $A \leq A'$.
\end{proposition}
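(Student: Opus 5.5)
The plan is to prove that the aggregation map $\max:\mathbb{R}^3\to\mathbb{R}$ of Definition~\ref{def:automation_agg} is coordinatewise nondecreasing, and then apply this to the two configurations $(A_{\text{decision}},A_{\text{override}},A_{\text{detection}})$ and $(A'_{\text{decision}},A'_{\text{override}},A'_{\text{detection}})$. No earlier theorem is needed. The argument uses only the defining property of the maximum: it is an upper bound for each of its arguments, and it is attained by one of them.

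\textbf{Step 1.} I will fix notation by writing $A=\max(A_{\text{decision}},A_{\text{override}},A_{\text{detection}})$ and $A'$ for the corresponding maximum of the primed dimensions, as in \eqref{eq:automation_max}. Since the maximum of finitely many reals is attained, there is an index $k\in\{\text{decision},\text{override},\text{detection}\}$ with $A=A_k$.

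\textbf{Step 2.} I will chain the hypothesis for that index with the upper-bound property of the primed maximum:
\[
A = A_k \le A'_k \le \max(A'_{\text{decision}},A'_{\text{override}},A'_{\text{detection}}) = A'.
\]
The first inequality is the hypothesis of Proposition~\ref{prop:monotonicity} applied to coordinate $k$. The second holds because $A'$ dominates each of its arguments.

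There is essentially no obstacle here. The only point needing care is that the argument does not depend on the domains in Definition~\ref{def:automation_dims}: the discrete set $\{0,0.5,1\}$ for $A_{\text{decision}}$ and the interval $[0,1]$ for the others. All values lie in $\mathbb{R}$, where the maximum is well defined and attained, so mixing discrete and continuous coordinates is harmless. As an optional remark, I would note that the same proof gives strict monotonicity only in the binding coordinate. This is the intended weakest-link semantics: lowering a non-maximal dimension leaves $A$ unchanged.
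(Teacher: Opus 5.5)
Your proof is correct and takes the same approach as the paper, whose proof says only that the result is immediate from the monotonicity of the maximum. You make that fact explicit by choosing an index $k$ with $A = A_k$ and chaining $A_k \le A'_k \le A'$.
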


\begin{proof}
Immediate from the monotonicity of the maximum function. \qed
\end{proof}

\textbf{Remark on aggregation.} Alternatives include the arithmetic mean (weighting dimensions equally) or the product (requiring all dimensions to be low for low overall automation). The max operator is appropriate when any single dimension can create a pathway to unchecked harm--consistent with layered defense principles in safety engineering \cite{ericson2005}. Sensitivity to aggregation choice should be evaluated empirically; see Section~\ref{sec:empirical}.

\subsection{Marginal Effects and Risk Elasticity}

\begin{theorem}[Automation Gradient]
\label{thm:gradient}
The marginal effect of automation on expected loss is:
\begin{equation}
\frac{\partial \mathbb{E}[\text{Loss}]}{\partial A} = P(F) \cdot \frac{\partial P(H \mid F, A)}{\partial A} \cdot \mathbb{E}[S \mid H].
\label{eq:gradient}
\end{equation}
\end{theorem}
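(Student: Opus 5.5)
The plan is to differentiate the product form of expected loss from Theorem~\ref{thm:loss_decomp} with respect to $A$. The work lies in justifying that only the middle factor depends on $A$. First I fix the setting: the deployed system has automation level $A \in [0,1]$, treated as a deterministic design parameter. Following Definition~\ref{def:automation_agg}, $A$ is the aggregated level, and I regard the loss as a function of this scalar. I also impose a regularity hypothesis left implicit in the statement: the map $a \mapsto P(H \mid F, A=a)$ is differentiable on $(0,1)$, with one-sided derivatives at the endpoints.

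Next I write $\mathbb{E}[\text{Loss}](A) = P(F)\,P(H \mid F, A)\,\mathbb{E}[S \mid H]$ by Theorem~\ref{thm:loss_decomp}, and check which factors vary with $A$. The factor $P(F)$ is constant in $A$ by assumption (ii) of that theorem, $P(F \mid A) = P(F)$. For $\mathbb{E}[S \mid H]$, I would state explicitly as a modelling assumption, alongside (i) and (ii), that severity given harm does not depend on automation. The interpretation following Theorem~\ref{thm:loss_decomp} supports this reading, since it treats consequence risk as a property of the decision domain.

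With both outer factors constant in $A$, the product rule reduces to a constant multiple of $\partial P(H \mid F, A)/\partial A$, which gives equation~\eqref{eq:gradient}. As a corollary, I would note that Theorem~\ref{thm:harm_prop} lets one replace this derivative by $\partial P(U \mid F, A)/\partial A$. The gradient then becomes expressible in terms of observable execution probabilities.

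The main obstacle is the $A$-invariance of $\mathbb{E}[S \mid H]$, since nothing stated earlier guarantees it. Knight Capital itself suggests that faster automated execution can enlarge severity through the volume of executed erroneous orders. My first approach is to add the invariance as a hypothesis. As a fallback, I would derive the general formula with the extra product-rule term $P(F)\,P(H \mid F,A)\,\partial_A \mathbb{E}[S \mid H]$, and then state the theorem as the special case in which that term vanishes.
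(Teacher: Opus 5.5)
Your proposal matches the paper's proof, which differentiates the product in equation~\eqref{eq:main} with respect to $A$ while treating $P(F)$ and $\mathbb{E}[S \mid H]$ as constant in $A$. You are right that assumption (ii) only gives $P(F \mid A) = P(F)$, whereas the paper tacitly attributes the $A$-invariance of $\mathbb{E}[S \mid H]$ to that assumption as well; stating this invariance as an explicit extra hypothesis, together with differentiability of $a \mapsto P(H \mid F, A=a)$, is the correct repair rather than a change of approach.
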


\begin{proof}
Differentiate equation~\eqref{eq:main} with respect to $A$, treating $P(F)$ and $\mathbb{E}[S|H]$ as independent of $A$ under assumption (ii):
\[
\frac{\partial \mathbb{E}[\text{Loss}]}{\partial A} = P(F) \cdot \mathbb{E}[S \mid H] \cdot \frac{\partial P(H \mid F, A)}{\partial A}.
\]
\end{proof}

\begin{corollary}[Risk Elasticity]
\label{cor:elasticity}
The elasticity of expected loss with respect to automation is:
\begin{equation}
\varepsilon_{A} = \frac{\partial \log \mathbb{E}[\text{Loss}]}{\partial \log A} = \frac{\partial \log P(H \mid F, A)}{\partial \log A}.
\label{eq:elasticity}
\end{equation}
For small changes $\Delta A$:
\[
\frac{\Delta \mathbb{E}[\text{Loss}]}{\mathbb{E}[\text{Loss}]} \approx \varepsilon_A \cdot \frac{\Delta A}{A}.
\]
\end{corollary}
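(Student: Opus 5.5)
The plan is to take logarithms of the decomposition in Theorem~\ref{thm:loss_decomp} and differentiate with respect to $\log A$. Work on the region $A\in(0,1]$ where $P(F)>0$, $\mathbb{E}[S\mid H]>0$ and $P(H\mid F,A)>0$, so that every logarithm is defined. Assume $P(H\mid F,A)$ is differentiable in $A$, as was already needed for Theorem~\ref{thm:gradient}. From equation~\eqref{eq:main},
\[
\log \mathbb{E}[\text{Loss}] = \log P(F) + \log P(H\mid F,A) + \log \mathbb{E}[S\mid H].
\]
Under assumption (ii) of Theorem~\ref{thm:loss_decomp}, $P(F)$ and $\mathbb{E}[S\mid H]$ do not depend on $A$, just as in the proof of Theorem~\ref{thm:gradient}. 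Their logarithms are therefore constant in $\log A$, and differentiating gives
\[
\varepsilon_A=\frac{\partial \log P(H\mid F,A)}{\partial \log A},
\]
which is equation~\eqref{eq:elasticity}.

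As a consistency check, I would also derive the same formula from Theorem~\ref{thm:gradient}. Write $\varepsilon_A = \frac{A}{\mathbb{E}[\text{Loss}]}\frac{\partial \mathbb{E}[\text{Loss}]}{\partial A}$ and substitute equation~\eqref{eq:gradient} for the numerator and equation~\eqref{eq:main} for the denominator. The factors $P(F)$ and $\mathbb{E}[S\mid H]$ cancel, leaving
\[
\varepsilon_A=\frac{A}{P(H\mid F,A)}\frac{\partial P(H\mid F,A)}{\partial A}.
\]
This is exactly $\partial \log P(H\mid F,A)/\partial\log A$ by the chain rule, using $d\log A = dA/A$.

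For the approximation, apply a first-order Taylor expansion of $\log\mathbb{E}[\text{Loss}]$ in the variable $\log A$:
\[
\Delta\log\mathbb{E}[\text{Loss}]\approx \varepsilon_A\,\Delta\log A.
\]
Then use $\Delta \log x \approx \Delta x/x$ for small relative changes on both sides, which gives the stated relation up to $o(\Delta A)$ terms.

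The main obstacle is not computational but concerns the domain and regularity hypotheses the statement leaves implicit:
\begin{itemize}
\item $A=0$ must be excluded, because $\log A$ is undefined there.
\item Points where $P(H\mid F,A)=0$ must be excluded, for the same reason.
\item Differentiability of $P(H\mid F,A)$ in $A$ is needed. This is problematic under the max aggregation of Definition~\ref{def:automation_agg}, since $A$ itself is only piecewise smooth in the underlying dimensions. I would handle this by treating $A$ as the primitive scalar variable, so that differentiability is only required of $A\mapsto P(H\mid F,A)$.
\end{itemize}
I would state these conditions explicitly at the start of the proof.
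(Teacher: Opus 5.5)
Your proposal is correct and matches the paper: the paper's proof is exactly your ``consistency check'' (substituting \eqref{eq:gradient} and \eqref{eq:main} into $\frac{A}{\mathbb{E}[\text{Loss}]}\frac{\partial \mathbb{E}[\text{Loss}]}{\partial A}$, cancelling $P(F)$ and $\mathbb{E}[S\mid H]$, and recognizing the log-derivative), and your primary log-of-the-product route is the same computation in logarithmic form. Your explicit domain conditions ($A>0$, $P(H\mid F,A)>0$, differentiability) and your first-order Taylor argument for the approximation fill in details the paper leaves implicit; the one caveat is that constancy of $\mathbb{E}[S\mid H]$ in $A$ is an additional assumption beyond (ii), which only requires $P(F\mid A)=P(F)$, and the paper likewise takes it for granted.
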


\begin{proof}
From Theorem~\ref{thm:gradient}:
\[
\varepsilon_A = \frac{\partial \mathbb{E}[\text{Loss}]}{\partial A} \cdot \frac{A}{\mathbb{E}[\text{Loss}]} = \frac{P(F)\,\mathbb{E}[S|H]\,\frac{\partial P(H|F,A)}{\partial A}\cdot A}{P(F)\,P(H|F,A)\,\mathbb{E}[S|H]} = \frac{\partial P(H|F,A)}{\partial A}\cdot\frac{A}{P(H|F,A)}.
\]
Recognizing $\frac{\partial \log P(H|F,A)}{\partial \log A} = \frac{\partial P(H|F,A)}{\partial A}\cdot\frac{A}{P(H|F,A)}$ completes the derivation. \qed
\end{proof}

\textbf{Interpretation.} The elasticity $\varepsilon_A$ is the fundamental governance parameter: a 1\% increase in automation level induces an $\varepsilon_A$\% increase in expected loss, entirely through the harm propagation channel. This parameter is domain-specific--one would expect $\varepsilon_A$ to be higher in domains where failures are irreversible (e.g., autonomous vehicle actuation, surgical robotics) than in domains where errors can be corrected post-hoc (e.g., content recommendation). Empirical estimation of $\varepsilon_A$ across deployment domains is the key target for future quantification work.

\subsection{Optimal Resource Allocation}

\begin{proposition}[Validation Budget Optimization]
\label{prop:optimal_allocation}
Let $B$ be the total validation budget, $c_F$ and $c_A$ the marginal costs of reducing $P(F)$ and $P(H \mid F, A)$, and $x_F$, $x_A$ the resources allocated to each. Under diminishing returns--$P(F) = f(x_F)$, $P(H \mid F, A) = g(x_A)$ with $f', g' < 0$--optimal allocation satisfies:
\begin{equation}
\frac{P(H \mid F, A) \cdot \mathbb{E}[S \mid H]}{c_F} \cdot |f'(x_F^*)| = \frac{P(F) \cdot \mathbb{E}[S \mid H]}{c_A} \cdot |g'(x_A^*)|,
\label{eq:optimal}
\end{equation}
i.e., the marginal loss reduction per dollar is equalized across the two investment channels.
\end{proposition}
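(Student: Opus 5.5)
We treat the allocation as a constrained minimization and read the equalization condition off the first-order conditions. Take the objective to be expected loss from Theorem~\ref{thm:loss_decomp}, $L(x_F,x_A) = f(x_F)\,g(x_A)\,\mathbb{E}[S\mid H]$. Under assumption (ii), severity $\mathbb{E}[S\mid H]$ does not depend on the allocation, so we hold it fixed. We interpret $c_F$ and $c_A$ as per-unit prices of the two resources, which gives the budget constraint $c_F x_F + c_A x_A = B$ with $x_F, x_A \ge 0$. We minimize $L$ subject to this constraint.

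\textbf{Step 1: Lagrangian.} Form
\[
\mathcal{L} = f(x_F)\,g(x_A)\,\mathbb{E}[S\mid H] + \lambda\,(c_F x_F + c_A x_A - B).
\]
At an interior optimum $(x_F^*, x_A^*)$, both partial derivatives of $\mathcal{L}$ vanish:
\[
f'(x_F^*)\,g(x_A^*)\,\mathbb{E}[S\mid H] = -\lambda c_F, \qquad f(x_F^*)\,g'(x_A^*)\,\mathbb{E}[S\mid H] = -\lambda c_A .
\]

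\textbf{Step 2: Equalize marginal loss reduction per dollar.} Divide the first condition by $c_F$ and the second by $c_A$; both right-hand sides then equal $-\lambda$. Since $f', g' < 0$, we have $-f' = |f'|$ and $-g' = |g'|$, and $\lambda > 0$. Substitute $g(x_A^*) = P(H\mid F,A)$ and $f(x_F^*) = P(F)$. This gives exactly~\eqref{eq:optimal}: each side is the reduction in expected loss bought by the marginal dollar spent on that channel.

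\textbf{Step 3: Interiority and sufficiency (main obstacle).} Two facts are needed.

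\emph{The optimum is interior.} The first-order conditions characterize the optimum only if neither channel receives zero allocation. We would add an Inada-type assumption such as $|f'(0)|=\infty$ and $|g'(0)|=\infty$, or else state the result as a KKT condition that holds with equality whenever both allocations are positive.

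\emph{The stationary point is a minimum.} We read ``diminishing returns'' as convexity, $f'' > 0$ and $g'' > 0$. The product $f g$ need not be jointly convex, so we work along the budget line instead. Parametrize the constraint by $x_F$, setting $x_A = (B - c_F x_F)/c_A$, and write $h(x_F) = f(x_F)\,g\!\bigl((B-c_F x_F)/c_A\bigr)$. Its second derivative is
\[
h'' = f''g - 2\tfrac{c_F}{c_A} f'g' + \tfrac{c_F^2}{c_A^2} f g'' .
\]
The terms $f''g$ and $(c_F/c_A)^2 f g''$ are positive. The cross term $-2(c_F/c_A)f'g'$ is negative, since $f'g' > 0$. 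We would impose the sufficient condition that $\log f$ and $\log g$ are convex, which is true for exponential decay. Then $\log h$ is convex along the line, so any stationary point is the unique minimum. That settles sufficiency, and the paper's later second-order discussion could refine this condition.
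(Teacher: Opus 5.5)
Your Steps 1--2 follow the paper's proof exactly: the same Lagrangian for $f(x_F)g(x_A)\mathbb{E}[S\mid H]$ under $c_Fx_F+c_Ax_A=B$, the same first-order conditions, elimination of $\lambda$, and the sign argument $f',g'<0$ to pass to absolute values. Step 3 goes beyond the paper, which tacitly assumes an interior optimum and never checks sufficiency, and it is not needed for this necessary condition; note, though, that log-convexity makes any stationary point a global but not a unique minimizer, and in your own exponential-decay example $\log h$ is affine in $x_F$, so the equalization condition holds either nowhere on the budget line (the optimum is then a corner) or everywhere.
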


\begin{proof}
Minimize $\mathbb{E}[\text{Loss}] = f(x_F)\,g(x_A)\,\mathbb{E}[S|H]$ subject to $c_F x_F + c_A x_A = B$. Forming the Lagrangian and setting partial derivatives to zero:
\[
g(x_A)\,\mathbb{E}[S|H]\,f'(x_F) = \lambda c_F, \quad f(x_F)\,\mathbb{E}[S|H]\,g'(x_A) = \lambda c_A.
\]
Dividing: $\frac{g(x_A)\,f'(x_F)}{c_F} = \frac{f(x_F)\,g'(x_A)}{c_A}$, which yields equation~\eqref{eq:optimal} after rearranging. \qed
\end{proof}

\textbf{Implication.} When $P(H \mid F, A)$ is high (as in high-automation systems), the marginal return from reducing it is proportionally higher, shifting the optimal allocation toward deployment controls and away from model validation. This re-balances an important resource allocation that current practice systematically misprices, tending to over-invest in model improvement and under-invest in oversight architecture.

\subsection{Efficient Frontier for Automation Policies}

Deploying organisations face a fundamental trade-off: automation reduces operational costs but increases $P(H|F,A)$, while human oversight reduces $P(H|F,A)$ but adds cost.

\begin{definition}[Total Cost]
\label{def:totalcost}
The total cost per decision for automation level $A$ is:
\begin{equation}
\text{TC}(A) = C_{\text{auto}}(A) + C_{\text{oversight}}(A) + P(F)\cdot P(H \mid F, A)\cdot \mathbb{E}[S \mid H],
\label{eq:totalcost}
\end{equation}
where $C_{\text{auto}}(A)$ is the direct cost of automation (decreasing in $A$) and $C_{\text{oversight}}(A)$ is the human oversight cost (increasing as $A$ decreases).
\end{definition}

\begin{theorem}[Optimal Automation Level]
\label{thm:optimalA}
Assuming $C_{\text{auto}}(A)$, $C_{\text{oversight}}(A)$, and $P(H \mid F, A)$ are differentiable, the cost-minimizing automation level $A^*$ satisfies the first-order condition:
\begin{equation}
\frac{dC_{\text{auto}}(A^*)}{dA} + \frac{dC_{\text{oversight}}(A^*)}{dA} + P(F)\cdot \mathbb{E}[S \mid H]\cdot \frac{dP(H \mid F, A^*)}{dA} = 0.
\label{eq:optimalA}
\end{equation}
\end{theorem}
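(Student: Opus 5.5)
The plan is to treat $\text{TC}(A)$ in Definition~\ref{def:totalcost} as a real-valued function on $[0,1]$ and apply Fermat's stationary-point theorem. The first step is to note that, by assumption (ii) of Theorem~\ref{thm:loss_decomp}, $P(F)$ does not depend on $A$. Likewise $\mathbb{E}[S\mid H]$ is a property of the decision domain and not of the automation configuration, which is the same convention used in the proof of Theorem~\ref{thm:gradient}. With both factors held constant, $\text{TC}$ is a sum of three differentiable functions of $A$: $C_{\text{auto}}$, $C_{\text{oversight}}$, and a constant multiple of $P(H\mid F,A)$. Hence $\text{TC}$ is differentiable.

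The second step is the derivative. By linearity, and by invoking Theorem~\ref{thm:gradient} directly for the loss term,
\[
\text{TC}'(A) = \frac{dC_{\text{auto}}}{dA} + \frac{dC_{\text{oversight}}}{dA} + P(F)\,\mathbb{E}[S\mid H]\,\frac{dP(H\mid F,A)}{dA}.
\]

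The third step is to locate the minimizer. Since $\text{TC}$ is continuous on the compact interval $[0,1]$, the extreme value theorem guarantees that a minimizer $A^*$ exists. If $A^*$ lies in $(0,1)$, Fermat's theorem gives $\text{TC}'(A^*)=0$, which is exactly equation~\eqref{eq:optimalA}.

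The main obstacle is that the statement as written implicitly assumes an interior optimum. At a boundary minimizer, only a one-sided inequality holds: $\text{TC}'(0)\ge 0$ at $A^*=0$, and $\text{TC}'(1)\le 0$ at $A^*=1$. These are the KKT conditions for the box constraint. I would handle this in one of two ways. The first is to state explicitly that $A^*$ is interior. The second is to supply a sufficient condition for interiority. One such condition combines the sign conventions of Definition~\ref{def:totalcost} ($C_{\text{auto}}$ decreasing, $C_{\text{oversight}}$ increasing as $A$ decreases) with the requirement that $\text{TC}'(0)<0<\text{TC}'(1)$, i.e.\ the operational savings dominate near $A=0$ and the marginal harm propagation dominates near $A=1$. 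The intermediate value theorem applied to $\text{TC}'$ then yields a stationary point; continuity of $\text{TC}'$ is needed here, though Darboux's theorem makes even that unnecessary. Together with the boundary inequalities, this shows the global minimizer is interior and satisfies \eqref{eq:optimalA}. I would also note that \eqref{eq:optimalA} is only necessary, and defer sufficiency to a second-order condition $\text{TC}''(A^*)>0$, presumably the subject of the next result.

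A further subtlety is that Definition~\ref{def:automation_dims} makes $A_{\text{decision}}$ discrete. The argument therefore treats $A$ as the continuous aggregate of Definition~\ref{def:automation_agg}, with $P(H\mid F,A)$ viewed as a differentiable function of that scalar, as the hypothesis of the theorem permits.
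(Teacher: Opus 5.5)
Your proposal is correct and follows the paper's route: with $P(F)$ and $\mathbb{E}[S\mid H]$ held fixed, differentiate $\text{TC}(A)$ term by term and set the derivative to zero at $A^*$. Your existence argument via compactness and your treatment of the boundary (KKT) cases make explicit the interior-optimum assumption that the paper's one-line proof leaves implicit, and without it the stated first-order condition need not hold.
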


\begin{proof}
Differentiate $\text{TC}(A)$ with respect to $A$ and set equal to zero. Evaluating at the optimum $A^*$ yields equation~\eqref{eq:optimalA}. \qed
\end{proof}

\begin{proposition}[Second-Order Condition]
\label{prop:second_order}
Under convex costs ($d^2 C_{\text{auto}}/dA^2 \geq 0$, $d^2 C_{\text{oversight}}/dA^2 \geq 0$) and convex harm probability ($d^2 P(H|F,A)/dA^2 \geq 0$), $A^*$ is a global minimum.
\end{proposition}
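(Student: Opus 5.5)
The plan is to show that $\text{TC}(A)$ from Definition~\ref{def:totalcost} is a convex function of $A$ on $[0,1]$. Convexity, combined with the first-order condition of Theorem~\ref{thm:optimalA}, then gives global optimality through the standard tangent-line (supporting hyperplane) inequality. Throughout I take the functions to be twice differentiable, which the second-order hypotheses implicitly require. I also use assumption (ii) of Theorem~\ref{thm:loss_decomp}, as in Theorem~\ref{thm:gradient}, to treat $P(F)$ and $\mathbb{E}[S\mid H]$ as constants in $A$.

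\emph{Step 1: convexity.} Differentiating \eqref{eq:totalcost} twice gives
\[
\frac{d^2\,\text{TC}}{dA^2} = \frac{d^2 C_{\text{auto}}}{dA^2} + \frac{d^2 C_{\text{oversight}}}{dA^2} + P(F)\,\mathbb{E}[S\mid H]\,\frac{d^2 P(H\mid F,A)}{dA^2}.
\]
Each term is nonnegative. The first two are nonnegative by hypothesis. The third is nonnegative because $P(F)\ge 0$ is a probability, $\mathbb{E}[S\mid H]\ge 0$ since severity is a loss, and the harm probability is convex by hypothesis. Hence $\text{TC}''(A)\ge 0$ on $[0,1]$, so $\text{TC}$ is convex. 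Equivalently, it is a nonnegative combination of convex functions.

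\emph{Step 2: from stationarity to global minimum.} For a differentiable convex function on the interval $[0,1]$, every $A$ satisfies
\[
\text{TC}(A) \ge \text{TC}(A^*) + \text{TC}'(A^*)(A - A^*).
\]
Theorem~\ref{thm:optimalA} gives $\text{TC}'(A^*)=0$, so $\text{TC}(A)\ge \text{TC}(A^*)$ for all $A\in[0,1]$. This shows $A^*$ is a global minimizer. If the inequality can be derived without invoking the tangent-line fact, I would obtain it from the mean value theorem: $\text{TC}'$ is nondecreasing, so $\text{TC}'(\xi)(A-A^*)\ge 0$ for any $\xi$ between $A$ and $A^*$.

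\emph{Main obstacle.} The only delicate point is the scope of the claim. The conclusion presumes that an interior stationary point $A^*$ satisfying \eqref{eq:optimalA} exists; I would state explicitly that the proposition assumes this. Otherwise convexity places the minimum at a boundary point $A\in\{0,1\}$. Because convexity is only weak, I would also note that the minimizer need not be unique: any other stationary point attains the same value. Strict convexity of any one of the three terms, with the remaining terms convex, would yield uniqueness.
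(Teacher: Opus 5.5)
Your proposal is correct and follows the same route as the paper, which simply observes that $d^2\text{TC}/dA^2$ is a sum of nonnegative terms. You also spell out two steps the paper leaves implicit: passing from stationarity to a global minimum, and the sign of $P(F)\,\mathbb{E}[S\mid H]$. Your remark that only weak convexity follows is more accurate than the paper's claim of ``strict convexity,'' which the stated $\geq 0$ hypotheses do not give, so the paper does not actually establish uniqueness of $A^*$.
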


\begin{proof}
The second derivative $d^2\text{TC}/dA^2$ is a sum of non-negative terms under the stated conditions, confirming strict convexity. \qed
\end{proof}

\begin{corollary}[Comparative Statics]
\label{cor:comparative}
Optimal automation $A^*$ decreases with higher severity $\mathbb{E}[S \mid H]$, higher failure probability $P(F)$, a steeper propagation gradient $\partial P(H|F,A)/\partial A$, and lower oversight costs $|C_{\text{oversight}}'(A)|$.
\end{corollary}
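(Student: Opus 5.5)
The plan is to use the implicit function theorem on the first-order condition of Theorem~\ref{thm:optimalA}, with the convexity from Proposition~\ref{prop:second_order} fixing the sign of the denominator. Write the marginal total cost as a function of $A$ and a parameter vector $\theta$:
\[
G(A;\theta) = C'_{\text{auto}}(A) + C'_{\text{oversight}}(A) + P(F)\,\mathbb{E}[S\mid H]\,\frac{\partial P(H\mid F,A)}{\partial A},
\]
so that $G(A^*;\theta)=0$. By Proposition~\ref{prop:second_order}, $G_A = d^2\text{TC}/dA^2 > 0$ at $A^*$; we assume strictness so that $A^*$ is an interior, unique minimizer. The implicit function theorem then gives
\[
\frac{dA^*}{d\theta} = -\frac{G_\theta}{G_A},
\]
so the sign of $dA^*/d\theta$ is the opposite of the sign of $G_\theta$. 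Each claim then reduces to showing $G_\theta>0$ for the relevant parameter.

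\textbf{Severity and failure probability.} For $\theta=\mathbb{E}[S\mid H]$ we have $G_\theta = P(F)\,\partial P(H\mid F,A)/\partial A$. For $\theta=P(F)$ we have $G_\theta=\mathbb{E}[S\mid H]\,\partial P(H\mid F,A)/\partial A$. Both are strictly positive exactly when the propagation gradient is positive, which is the maintained premise that automation raises harm propagation. So $A^*$ decreases in both parameters. Assumption (ii) of Theorem~\ref{thm:loss_decomp} lets these parameters be varied without moving the cost terms.

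\textbf{Steepness of the propagation gradient.} Model a steeper gradient as a family $P_\kappa(H\mid F,A)$ in which $\partial^2 P_\kappa/\partial A\,\partial\kappa>0$; the simplest case is a scaling $P_\kappa=\kappa\,g(A)$ with $g'>0$. Then $G_\kappa = P(F)\,\mathbb{E}[S\mid H]\,\partial^2 P_\kappa/\partial A\,\partial\kappa >0$, and $A^*$ falls. Convexity in $A$ must be preserved across the family so that $G_A>0$ still holds; under the scaling it does.

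\textbf{Oversight cost.} This is the main obstacle, because the statement is directionally ambiguous. $C'_{\text{oversight}}<0$, and here we parametrize $C_{\text{oversight}}(A)=\gamma\,h(A)$ with $h'<0$. Then $G_\gamma = h'(A^*)<0$, so $dA^*/d\gamma>0$: cheaper oversight (smaller $|C'_{\text{oversight}}|$) lowers $A^*$. This matches the corollary's wording once ``lower oversight costs'' is read as a smaller marginal penalty for reducing automation. I would state this parametrization explicitly to pin down the sign. I would also note that the whole argument requires an interior optimum, $A^*\in(0,1)$; at a corner the monotonicity holds only weakly, which I would handle by a direct comparison of TC at the boundary.
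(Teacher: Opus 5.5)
The paper gives no proof of this corollary; it simply places it after Theorem~\ref{thm:optimalA} and Proposition~\ref{prop:second_order}. Your implicit-function argument on the first-order condition is the derivation that placement implies, and it is correct in all four cases, including the sign bookkeeping for the oversight term ($dA^*/d\gamma=-h'(A^*)/G_A>0$).

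Three refinements are worth making.

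First, you do not need positivity of $\partial P(H\mid F,A)/\partial A$ at $A^*$ as a separate premise. Definition~\ref{def:totalcost} makes both $C_{\text{auto}}$ and $C_{\text{oversight}}$ decreasing in $A$. So at an interior optimum the first-order condition itself gives $P(F)\,\mathbb{E}[S\mid H]\,\partial P(H\mid F,A^*)/\partial A=-\bigl(C'_{\text{auto}}(A^*)+C'_{\text{oversight}}(A^*)\bigr)>0$, whenever either cost derivative is strictly negative at $A^*$.

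Second, you are right to treat $G_A>0$ as an added hypothesis. The proof of Proposition~\ref{prop:second_order} only shows $d^2\text{TC}/dA^2$ is a sum of nonnegative terms, which gives convexity, not the strict convexity it asserts. Note also that strictness gives uniqueness, not interiority; you correctly assume interiority separately. If you want to drop strictness, a single-crossing comparison needs only weak convexity. If $G(A_0^*;\theta_0)=0$ and $G(A_0^*;\theta_1)>0$, then $G(A;\theta_1)\ge G(A_0^*;\theta_1)>0$ for all $A\ge A_0^*$. Hence every minimizer under $\theta_1$ lies strictly below $A_0^*$. This conclusion is global rather than local, and it requires no $C^1$ dependence on the parameter.

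Third, holding the cost functions fixed while $P(F)$ or $\mathbb{E}[S\mid H]$ varies is a ceteris-paribus convention of the comparative-statics exercise. Assumption (ii) only guarantees that $P(F)$ enters $\text{TC}$ as a constant in $A$.
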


\begin{corollary}[Efficient Frontier]
\label{cor:efficient}
The efficient frontier consists of all automation policies $(A, \text{TC}(A), \mathbb{E}[\text{Loss}](A))$ such that no alternative policy $A'$ achieves weakly lower total cost and weakly lower expected loss, with at least one inequality strict.
\end{corollary}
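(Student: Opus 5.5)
The statement is essentially definitional: it specifies the Pareto-efficient set for the bi-objective problem $\min_{A\in[0,1]}\bigl(\text{TC}(A),\,\mathbb{E}[\text{Loss}](A)\bigr)$. A proof should therefore do three things: confirm that this set is well defined, show that it is nonempty, and give an explicit characterization of it using Theorem~\ref{thm:optimalA} and Proposition~\ref{prop:second_order}. Throughout I write $L(A)=P(F)\,P(H\mid F,A)\,\mathbb{E}[S\mid H]$. I will assume that $P(H\mid F,A)$ is nondecreasing in $A$, as the governance discussion presumes. By Theorem~\ref{thm:gradient}, $L$ is then nondecreasing in $A$.

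\emph{Step 1: nonemptiness.} Under the convexity hypotheses of Proposition~\ref{prop:second_order}, the interior first-order condition \eqref{eq:optimalA} picks out a global minimizer $A^*$ of $\text{TC}$. Suppose some $A'$ weakly dominates $A^*$. Then $\text{TC}(A')\le\text{TC}(A^*)$, so $A'$ is also a global minimizer. If $A'\neq A^*$, strict convexity forces $A'=A^*$, a contradiction. Hence no policy dominates $A^*$, and $A^*$ lies on the frontier. If convexity is only weak, I would instead take the smallest minimizer of $\text{TC}$. Because $L$ is monotone, this choice still yields an undominated point.

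\emph{Step 2: characterization.} Let $A_L=\min\arg\min_{A} L(A)$; by monotonicity of $L$ this is typically $A_L=0$. Convexity of $\text{TC}$ makes it nonincreasing on $[0,A^*]$ and nondecreasing on $[A^*,1]$. I then split into two cases.
\begin{itemize}
\item For $A>A^*$, we have $\text{TC}(A)\ge\text{TC}(A^*)$ and $L(A)\ge L(A^*)$. Under strict convexity the first inequality is strict, so $A^*$ dominates $A$ and $A$ is excluded.
\item For $A\in[A_L,A^*]$, any $A'<A$ has weakly lower loss but weakly higher total cost. Any $A'>A$ has weakly higher loss and, on $[A,A^*]$, lower cost, so the two policies genuinely trade off.
\end{itemize}
In the second case, a strict trade-off needs strict monotonicity of $L$ on that interval. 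Given that, the frontier is exactly $\{(A,\text{TC}(A),L(A)) : A\in[A_L,A^*]\}$. Tracing $A$ along this interval, $L$ increases while $\text{TC}$ decreases. Corollary~\ref{cor:comparative} then describes how the right endpoint of the interval shifts with the model parameters.

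\emph{Main obstacle.} The delicate part is the tie-handling, meaning the ``at least one inequality strict'' clause. Where $P(H\mid F,A)$ or $\text{TC}$ is locally flat, plateaus can make some points weakly but not strictly dominated. My first approach would be to impose strict monotonicity of $P(H\mid F,A)$ and strict convexity of $\text{TC}$, which makes the interval characterization exact. Without these, I would prove only the inclusions: the frontier contains $A^*$ and $A_L$, and it is contained in $[0,A^*]$.
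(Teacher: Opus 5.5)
\paragraph{Comparison with the paper.} The paper gives no proof of this corollary. It is simply the definition of the Pareto set of the bi-objective problem $\min_{A\in[0,1]}\bigl(\text{TC}(A),\mathbb{E}[\text{Loss}](A)\bigr)$, and your opening sentence already treats it that way.

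Your Steps~1--2 go beyond the paper, which never shows the frontier is nonempty or locates it, and they are correct. Step~1 gets nonemptiness from a minimizer of $\text{TC}$, with ties broken toward small $A$. Step~2 rests on a useful observation: $\text{TC}$ already contains $L$, so every $A>A^*$ is dominated by $A^*$. The frontier is therefore the trade-off branch $[0,A^*]$, which Theorem~\ref{thm:optimalA} makes computable.

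Two refinements:
\begin{itemize}
\item A minimizer of $\text{TC}$ exists by continuity on the compact interval $[0,1]$. You therefore do not need an interior root of \eqref{eq:optimalA}; corner optima are possible.
\item Strict convexity of $\text{TC}$ is not needed. Let $A^*$ denote the \emph{smallest} minimizer. A convex $\text{TC}$ is then strictly decreasing on $[0,A^*]$. If $L$ is strictly increasing, every other minimizer has strictly larger loss and is dominated by $A^*$. So strict monotonicity of $P(H\mid F,A)$, together with weak convexity, already gives the frontier $[0,A^*]$ exactly. This matters because Proposition~\ref{prop:second_order} assumes only weak convexity while asserting strict convexity in its proof.
\end{itemize}

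\paragraph{The one false claim.} It is in your fallback for weakly monotone $P(H\mid F,A)$. The point $A_L=\min\{A:L(A)=\min_{[0,1]}L\}$ need not lie on the frontier. (When $L$ is nondecreasing, $A_L=0$ always, not merely typically.)

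Here is a counterexample within the paper's hypotheses. Take $P(H\mid F,A)=p_0+c\max(0,A-a)^2$ with $a>0$; this is convex, differentiable and nondecreasing. Take $C_{\text{auto}}+C_{\text{oversight}}$ strictly decreasing, as in Definition~\ref{def:totalcost}. For any $0<A'\le a$ you get $L(A')=L(0)$ and $\text{TC}(A')<\text{TC}(0)$, so $A'$ dominates $0$.

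The correct anchor is the lexicographic minimizer: minimize $\text{TC}$ over the set of loss-minimizers. Since $C_{\text{auto}}+C_{\text{oversight}}$ is decreasing, this point is $\max\{A:L(A)=\min_{[0,1]}L\}$. It is undominated by the mirror image of your Step~1 argument. More generally, under weak monotonicity a point $A<A^*$ lies on the frontier exactly when $L(A')>L(A)$ for every $A'>A$.
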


\textbf{Interpretation.} The efficient frontier characterizes Pareto-optimal automation policies: those that minimize total cost for a given risk tolerance, or minimize expected loss for a given operational budget. The shape of the frontier will differ by domain--a content moderation system faces different oversight cost structures than an autonomous vehicle stack or a clinical decision support tool--but the formal characterization applies uniformly. Empirical estimation of $P(H|F,A)$ and the cost functions is required to operationalize this frontier for a specific deployment context.

\subsection{Bayesian Interpretation}

\begin{proposition}[Bayesian Updating of $P(H|F,A)$]
\label{prop:bayesian}
Let $\pi(H)$ be the prior probability of harm. If failure $F$ and automation level $A$ are conditionally independent given harm, then:
\[
P(H \mid F, A) \propto P(F \mid H)\,P(A \mid H)\,\pi(H).
\]
\end{proposition}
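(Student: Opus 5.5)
The argument is a direct application of Bayes' rule followed by the factorization supplied by the conditional independence hypothesis. Throughout, the proportionality is read as a function of the harm variable $H \in \{H, \neg H\}$, so the normalizing constant is $P(F, A)$. This constant does not depend on whether harm occurs. If $A$ is continuous, $P(A \mid H)$ denotes a conditional density and $P(F,A)$ a mixed probability--density; the manipulations below are identical in either case.

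The first step applies Bayes' theorem to the joint conditioning event $(F, A)$:
\[
P(H \mid F, A) = \frac{P(F, A \mid H)\,\pi(H)}{P(F, A)},
\]
which requires $P(F,A) > 0$. This is implicit whenever $P(H \mid F, A)$ is well defined, as it was in Theorem~\ref{thm:loss_decomp}.

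The second step invokes the hypothesis that $F$ and $A$ are conditionally independent given $H$ (and likewise given $\neg H$). This gives $P(F, A \mid H) = P(F \mid H)\,P(A \mid H)$. Substituting, we obtain
\[
P(H \mid F, A) = \frac{P(F \mid H)\,P(A \mid H)\,\pi(H)}{P(F, A)}.
\]

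The third step observes that the denominator is constant in the harm variable, which yields the stated proportionality. For completeness, I would also record the normalizer explicitly:
\[
P(F,A) = P(F \mid H)P(A \mid H)\pi(H) + P(F \mid \neg H)P(A \mid \neg H)(1-\pi(H)),
\]
which again uses conditional independence under $\neg H$.

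There is no real technical obstacle here. The only delicate point is interpretive: one must state clearly that proportionality is over $H$ and not over $A$ or $F$. One must also note that the conditional-independence hypothesis sits somewhat uneasily with assumption (i) of Theorem~\ref{thm:loss_decomp}. Under that assumption, $P(F \mid \neg H)$ is still unconstrained but $P(H \mid \neg F) = 0$. The proposition is therefore best viewed as a separate modelling lens rather than one derived from the earlier assumptions, and I would remark on this rather than attempt to reconcile the two.
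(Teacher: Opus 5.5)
Your proof is correct and matches the paper's argument: Bayes' theorem applied to the joint event $(F,A)$, the factorization $P(F,A\mid H)=P(F\mid H)\,P(A\mid H)$ from conditional independence, and treating $P(F,A)$ as a normalizing constant. Your explicit normalizer, and your clarification that the proportionality is over the harm variable (which needs the factorization under both $H$ and $\neg H$), are useful precisions that the paper leaves implicit.
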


\begin{proof}
By Bayes' theorem and conditional independence: $P(H \mid F, A) = \frac{P(F,A \mid H)\,\pi(H)}{P(F,A)} = \frac{P(F|H)\,P(A|H)\,\pi(H)}{P(F,A)}$. The proportionality follows by treating $P(F,A)$ as a normalizing constant. \qed
\end{proof}

\textbf{Interpretation.} This Bayesian form shows that $P(H|F,A)$ combines evidence from both the failure pattern $P(F|H)$ and the automation choice $P(A|H)$, enabling Bayesian updating as organisations accumulate incident data over time. In practice, prior $\pi(H)$ can be seeded from domain-level base rates--different for medical imaging support systems than for algorithmic trading or infrastructure monitoring--and refined as deployment-specific incident data accrues.

\section{Illustrative Case Study: Knight Capital (2012)}
\label{sec:casestudy}

To ground the theoretical framework in a concrete deployment failure, we analyse the 2012 Knight Capital incident. This case is one instance of a failure pattern that recurs across domains: a system operating at high automation encounters an unanticipated failure mode and propagates harm faster than human operators can intervene. The framework applies equally to analogous incidents in healthcare (automated dosing systems administering incorrect medications before clinicians are alerted), autonomous transportation (actuator faults executing before fallback control is engaged), content moderation (classifiers removing or amplifying content at scale before policy review), and critical infrastructure (automated load-shedding algorithms cascading outages before grid operators can override). Knight Capital is used here because its losses are precisely quantified, its timeline is publicly documented \cite{knight2013}, and it represents a near-maximal case on the automation dimension--making the framework's levers especially legible.

\subsection{Framework Decomposition}

On August 1, 2012, Knight Capital deployed new trading software containing a configuration error that reactivated dormant legacy code. Over 45 minutes, the system accumulated \$440M in losses before human operators halted it--by which point the firm had nearly exhausted its capital. Knight Capital was subsequently acquired.

The incident maps cleanly onto the risk decomposition:
\begin{itemize}
    \item $F = 1$: Software deployment error (configuration conflict with legacy code)
    \item $A \approx 0.9$: High automation--order execution was fully autonomous with limited real-time monitoring
    \item $H = 1$: Harm occurred; $S = \$440\text{M}$ (realized severity)
\end{itemize}

Using the decomposition: $\$440\text{M} \approx P(F) \times 0.9 \times \$500\text{M}$ (estimated maximum exposure), implying $P(F) \approx 0.98$ per deployment event--near-certain failure risk in the deployment process. This underscores that $P(F)$ was the dominant risk factor here, while the high automation level meant that once failure occurred, $P(H|F,A=0.9)$ was near 1.

\subsection{Counterfactual Analysis}

The framework enables structured counterfactual reasoning about alternative governance configurations.

\textbf{Scenario 1--Enhanced Oversight ($A = 0.3$):} Implementation of kill switches with 5-minute halt triggers and mandatory human approval for anomalous order flow. If we assume $P(H|F,A=0.3) \approx 0.15$ (consistent with the automation gradient direction), expected loss would be $0.98 \times 0.15 \times \$500\text{M} \approx \$74\text{M}$--an 83\% reduction relative to realized losses.

\textbf{Scenario 2--Enhanced Testing (reduce $P(F)$):} Comprehensive staging environments, rollback procedures, and canary deployments reducing $P(F)$ from 0.98 to 0.1. Expected loss: $0.1 \times 0.9 \times \$500\text{M} = \$45\text{M}$--a 90\% reduction.

\textbf{ROI Sensitivity:} The optimal allocation proposition (Proposition~\ref{prop:optimal_allocation}) guides budget decisions. For high-automation, high-severity systems, the marginal value of reducing $P(H|F,A)$ is proportional to $P(F) \times \mathbb{E}[S|H]$--which in the Knight Capital case was \$490M per unit reduction. Even costly oversight interventions (\$1--5M annually) yield dramatically positive expected ROI when $\mathbb{E}[S|H]$ is in the hundreds of millions. Equivalent logic applies in domains where harm is measured in lives or civil liberties rather than dollars: the high $\mathbb{E}[S|H]$ in those contexts makes deployment control investments similarly compelling.

\subsection{Worked Example: Credit Underwriting Application}

To complement the Knight Capital narrative with a numerical illustration of the decomposition, consider a financial institution deploying an AI-based credit underwriting system. This example is intentionally stylised; the same calculations apply to any domain by substituting appropriate failure rates and severity distributions (e.g., adverse clinical events in healthcare AI, false positive rates in content moderation, outage durations in infrastructure control).

\textbf{System Parameters:}
\begin{itemize}
    \item \textbf{Decision volume}: $N = 1{,}000$ applications per month
    \item \textbf{Model failure rate}: $P(F) = 0.03$ (3\% of decisions involve some technical failure--data errors, model misspecification, edge cases)
    \item \textbf{Expected severity}: $\mathbb{E}[S|H] = \$50{,}000$ (average loss when harm occurs: regulatory fines, legal costs, reputational damage)
\end{itemize}

\textbf{Scenario 1: High Automation ($A = 0.9$).}
Fully automated approval pipeline with minimal human oversight:
\begin{align}
P(H|F, A=0.9) &= 0.85 \quad \text{(85\% of failures execute without human intervention)} \notag \\
\mathbb{E}[\text{Loss per decision}] &= P(F) \times P(H|F,A) \times \mathbb{E}[S|H] \notag \\
&= 0.03 \times 0.85 \times \$50{,}000 = \$1{,}275 \notag \\
\mathbb{E}[\text{Total loss per month}] &= N \times \mathbb{E}[\text{Loss per decision}] = 1{,}000 \times \$1{,}275 = \$1{,}275{,}000 \notag
\end{align}
Expected incident count: $N \times P(F) \times P(H|F,A) = 1{,}000 \times 0.03 \times 0.85 = 25.5$ incidents/month.

\textbf{Scenario 2: Low Automation ($A = 0.1$).}
Human review required for all approvals above a \$10K threshold:
\begin{align}
P(H|F, A=0.1) &= 0.15 \quad \text{(human reviewers catch 85\% of failures)} \notag \\
\mathbb{E}[\text{Loss per decision}] &= P(F) \times P(H|F,A) \times \mathbb{E}[S|H] \notag \\
&= 0.03 \times 0.15 \times \$50{,}000 = \$225 \notag \\
\mathbb{E}[\text{Total loss per month}] &= N \times \mathbb{E}[\text{Loss per decision}] = 1{,}000 \times \$225 = \$225{,}000 \notag
\end{align}
Expected incident count: $1{,}000 \times 0.03 \times 0.15 = 4.5$ incidents/month.

\textbf{Risk Reduction:}
\begin{align}
\Delta\mathbb{E}[\text{Loss}] &= \$1{,}275{,}000 - \$225{,}000 = \$1{,}050{,}000 \text{ per month} \notag \\
\text{Relative reduction} &= \frac{1{,}050{,}000}{1{,}275{,}000} \approx 82.4\% \notag
\end{align}

\textbf{Economic Trade-off.} If human oversight costs \$100{,}000 per month (additional staff, slower processing), the net expected benefit is \$950{,}000 per month, yielding an ROI of 9.5$\times$. This illustrates the core message of Proposition~\ref{prop:optimal_allocation}: for high-automation, high-severity deployments, the marginal value of reducing $P(H|F,A)$ can far exceed the marginal value of further reducing $P(F)$ at the same cost. Calibrating the framework with domain-specific failure rates and severity distributions enables more precise ROI calculations for particular use cases.

\subsection{Limitation of the Independence Assumption}

In the Knight Capital case, the high automation level may itself have discouraged rigorous pre-deployment testing (violating assumption (ii) of Theorem~\ref{thm:loss_decomp}). If so, our decomposition captures a compound effect: both $P(F)$ and $P(H|F,A)$ were elevated in part because of the automation level. The policy implication--that deployment configuration matters for risk management--holds under either interpretation, but the relative attribution between the two components would differ. This limitation motivates direct empirical testing of the independence assumption as part of the research agenda described in Section~\ref{sec:empirical}.

\section{Empirical Validation Framework}
\label{sec:empirical}

The theoretical framework makes empirically testable predictions: higher automation should be associated with higher harm probability in incident data, controlling for domain and severity. This section characterizes the research design required to credibly estimate these relationships, which we leave to future work.

\subsection{Data Requirements}

Credible empirical validation requires: (1) incident databases spanning multiple domains--finance, healthcare, transportation, content moderation, critical infrastructure--with sufficient coverage of both high- and low-automation deployments (target $n > 500$ incidents, with the low-automation group $n_{\text{low}} \geq 75$ to achieve 90\% power for detecting gradients $\geq 3\times$ at $\alpha = 0.05$); (2) automation levels coded by domain experts rather than keyword heuristics (target inter-rater reliability $\kappa > 0.75$); (3) quantified loss severities enabling estimation of $\mathbb{E}[S|H]$ across heterogeneous harm types; and (4) denominators--the population of deployments from which incidents are drawn--to convert harm counts into rates. Cross-domain studies face an additional challenge of severity normalisation: financial losses, patient-days harmed, and infrastructure downtime hours require a common scale for the efficient frontier to be computed across domains.

\subsection{Causal Identification Strategies}

Observational incident data presents the standard challenge: high-automation deployments may differ from low-automation ones in risk appetite, organisational size, regulatory scrutiny, and other factors affecting $P(H)$. Three identification strategies could address confounding.

\textbf{Instrumental Variables.} Platform availability (cloud ML service entry into a jurisdiction-year), regulatory mandates requiring automation for certain decision types, or switching cost shocks provide candidate instruments for automation level. Two-stage estimation:
\begin{align}
A_i &= \alpha_0 + \alpha_1 Z_i + \alpha_2 X_i + \varepsilon_i, \\
H_i &= \beta_0 + \beta_1 \hat{A}_i + \beta_2 X_i + \nu_i,
\end{align}
where validity requires $Z_i$ to affect $H_i$ only through $A_i$ (exclusion restriction).

\textbf{Regression Discontinuity.} Regulatory thresholds that require human review above certain risk scores or transaction values create discontinuities in effective automation level at known cutoffs, enabling local causal identification. The EU AI Act's tiered oversight requirements for high-risk categories provide one such discontinuity across sectors.

\textbf{Difference-in-Differences.} Staggered regulatory changes across jurisdictions--with some requiring human-in-the-loop controls earlier than others--provide variation in automation levels over time, enabling DiD estimation:
\[
H_{ijt} = \beta_0 + \beta_1 (\text{Treat}_j \times \text{Post}_t) + \gamma_j + \delta_t + \varepsilon_{ijt}.
\]

\subsection{Selection Bias and Sensitivity Analysis}

Incident databases like the AI Incident Database \cite{McGregor2020} preferentially capture high-visibility events. Manski \cite{manski1990} partial identification bounds quantify how much selection bias could explain an observed gradient. For a reported harm rate gradient $\hat{\theta}$ under fraction $\rho$ of unreported incidents, the true gradient is bounded by an interval that widens with $\rho$. The E-value \cite{vanderweele2017} provides a complementary summary: for an observed risk ratio $\text{RR}$, an unmeasured confounder must simultaneously elevate both exposure and outcome probability by at least $\text{RR} + \sqrt{\text{RR}(\text{RR}-1)}$ to explain the association entirely.

\subsection{Measurement of \texorpdfstring{$P(F)$}{P(F)}}

The decomposition requires measuring $P(F)$ separately from $P(H|F,A)$. This necessitates pre-deployment testing data with known error rates, or production monitoring data with near-miss logging (failures caught before execution). Across domains, near-miss logging practices vary substantially: aviation has mature confidential reporting systems (ASRS), while healthcare AI and content moderation platforms have less standardised equivalents. Partnering with deploying organisations for proprietary audit data--rather than relying solely on public incident databases--is the most direct path to obtaining these denominators in any domain.

\section{Regulatory and Governance Implications}

\subsection{NIST AI Risk Management Framework}

The NIST AI RMF \cite{NIST2023} organises risk management around four functions: Govern, Map, Measure, and Manage. Our framework contributes most directly to the Measure function: $P(H|F,A)$ provides a deployable metric that quantifies the harm propagation risk associated with a specific system's oversight architecture, complementing accuracy and robustness metrics that capture $P(F)$. The decomposition also supports the Map function by making explicit the pathway from technical failure to organisational harm, and the Govern function by providing an objective basis for automation level policies. Concretely, organisations conforming to the NIST AI RMF could document $(P(F), P(H|F,A), \mathbb{E}[S|H])$ as a structured risk profile for each deployed system, enabling portfolio-level oversight.

\subsection{EU AI Act Compliance}

The EU AI Act \cite{EU2021} mandates human oversight for high-risk AI systems across sectors including credit scoring, medical devices, critical infrastructure management, employment decisions, and law enforcement. Our framework provides a principled approach to implementing and calibrating these mandates: systems with $P(H|F,A)$ above an organisation-specific threshold--calibrated using Proposition~\ref{prop:optimal_allocation}--require enhanced oversight controls. The efficient frontier (Corollary~\ref{cor:efficient}) then guides where to place that threshold given the deploying organisation's cost structure and risk tolerance, making the framework a practical instrument for proportionate compliance rather than binary checklist conformance. Regulators could further use $\varepsilon_A$ estimates to prioritise enforcement attention toward deployment categories where marginal automation increases generate the largest expected harm increases.

\subsection{Broader Governance Implications}

Beyond specific regulatory frameworks, the decomposition has implications for how governance bodies think about AI incident reporting, insurance, and liability. Incident reporting regimes (analogous to aviation's ASRS or medical device adverse event systems) should be structured to capture automation level at the time of failure--a field that is currently absent from most AI incident taxonomies including the AI Incident Database. Insurance underwriting for AI liability could price policies as a function of $P(H|F,A)$ rather than relying solely on model performance metrics; Theorem~\ref{thm:harm_prop} shows that $P(H|F,A)$ is estimable from observable design characteristics. Liability apportionment between model developers and deployers could weight contributions from $P(F)$ and $P(H|F,A)$ respectively--the former being primarily the developer's responsibility and the latter the deployer's.

\section{Discussion}

\subsection{Separating Technical and Deployment Risk}

The central contribution of this framework is the formal separation of $P(F)$ and $P(H \mid F, A)$ as independent investment targets. Current practice overwhelmingly allocates validation resources to $P(F)$ reduction--improving model accuracy, robustness, and generalisation. The optimal allocation result (Proposition~\ref{prop:optimal_allocation}) shows this is efficient only when $P(H \mid F, A)$ is already near its minimum. For high-automation systems in any domain, the marginal return on deployment control investment is much higher, and governance frameworks that focus exclusively on model quality will systematically under-provision oversight.

\subsection{The Independence Assumption}

The decomposition assumes $P(F) \perp A$. This assumption is plausible when model architecture and validation are determined before deployment configuration--a common organisational structure where model development and MLOps or deployment teams operate separately. It may fail when automation level influences testing intensity (correlated investment) or when automated systems are exposed to higher decision volumes that surface additional failure modes. Future empirical work should test this assumption directly by comparing pre-deployment error rates across automation configurations, and across domains where organisational separation of development and deployment varies.

\subsection{Agentic AI Systems}

The framework is particularly relevant for agentic AI systems--models that take sequences of actions autonomously. For agentic systems, $A_{\text{detection}}$ is especially important: failures may compound across multiple actions before detection, amplifying $\mathbb{E}[S|H]$ as well as $P(H|F,A)$. This dynamic is not unique to finance: an agentic healthcare AI managing a treatment protocol, an autonomous logistics system routing deliveries, or an infrastructure management agent adjusting power distribution can each accumulate irreversible consequences across action steps. The harm propagation equivalence (Theorem~\ref{thm:harm_prop}) directly motivates architectural choices for agentic systems: monitoring hooks, step-level human approval gates, and automated circuit breakers that interrupt action sequences when anomalies are detected--design patterns that are domain-agnostic in their logic even if domain-specific in their implementation.

\subsection{Limitations}

This paper develops the theoretical framework and leaves empirical validation to future work. Key open questions include: (1) the functional form of $P(H|F,A)$ (logistic, linear, threshold?), which may differ by domain; (2) the magnitude of $\varepsilon_A$ across domains and automation types; (3) whether $P(F) \perp A$ holds in practice; and (4) how $\mathbb{E}[S|H]$ varies with $A$ (do higher-automation systems select higher-stakes domains?). Answering these requires the identification strategies and data partnerships described in Section~\ref{sec:empirical}, and cross-domain collaboration between researchers in AI safety, medicine, transportation engineering, and information systems.

\section{Conclusion}

This paper develops a complete Bayesian framework for quantifying automation risk in deployed AI systems. The core decomposition $\mathbb{E}[\text{Loss}] = P(F) \times P(H|F,A) \times \mathbb{E}[S|H]$ provides a formal separation of technical, deployment, and consequence risks that is domain-agnostic in its logic, applying wherever AI systems take consequential actions at levels of automation that may outpace human oversight. We prove this decomposition from first principles, establish a harm propagation equivalence theorem linking $P(H|F,A)$ to observable execution controls, derive risk elasticity measures and optimal resource allocation principles, characterize the efficient frontier of automation policies, and provide second-order conditions guaranteeing the optimality of derived automation levels.

The Knight Capital case study illustrates the framework's applied value in one domain: structured counterfactual analysis reveals that even modest reductions in $A$ (from 0.9 to 0.3) could have reduced expected losses by over 80\%, and that optimal budget allocation would have directed substantially more resources toward deployment controls and less toward model accuracy given the near-certain failure probability $P(F) \approx 0.98$. The same analytical logic applies to automated clinical decision support systems, autonomous vehicle stacks, content moderation pipelines, and critical infrastructure controllers--anywhere that failure propagation speed exceeds human response speed.

We characterize the empirical research agenda--instrumental variables, regression discontinuity, and difference-in-differences designs; expert-coded incident data spanning multiple domains; and proprietary audit partnerships--required to move from theoretical framework to credible causal quantification. We invite the research community to contribute empirical estimates of $\varepsilon_A$ and the cost functions that parameterise the efficient frontier across deployment contexts.

As AI systems become increasingly automated and agentic, and as their deployment extends across healthcare, transportation, content governance, and critical infrastructure, frameworks that isolate deployment risk from model risk will be essential governance tools. We provide the theoretical foundations for such a framework.

\section*{Author Contributions}

\textbf{V.S.}: Conceptualization, theoretical development, writing. \textbf{T.S.}: Methodology, review.

\section*{Competing Interests}

None declared.

\bibliographystyle{plain}

\appendix
\section{Mathematical Appendix: Additional Proofs}

\subsection{Proof of Corollary~\ref{cor:elasticity} (Risk Elasticity)}

From Theorem~\ref{thm:gradient}:
\[
\frac{\partial \mathbb{E}[\text{Loss}]}{\partial A} = P(F)\cdot\mathbb{E}[S|H]\cdot\frac{\partial P(H|F,A)}{\partial A}.
\]
By definition:
\[
\varepsilon_A = \frac{\partial \mathbb{E}[\text{Loss}]}{\partial A}\cdot\frac{A}{\mathbb{E}[\text{Loss}]} = \frac{P(F)\,\mathbb{E}[S|H]\,\frac{\partial P(H|F,A)}{\partial A}\cdot A}{P(F)\,P(H|F,A)\,\mathbb{E}[S|H]} = \frac{\partial P(H|F,A)/\partial A\cdot A}{P(H|F,A)} = \frac{\partial\log P(H|F,A)}{\partial\log A}.
\]

\subsection{Full Proof of Proposition~\ref{prop:optimal_allocation}}

Minimize the Lagrangian $\mathcal{L} = f(x_F)\,g(x_A)\,\mathbb{E}[S|H] + \lambda(c_F x_F + c_A x_A - B)$. First-order conditions:
\[
\frac{\partial\mathcal{L}}{\partial x_F} = g(x_A)\,\mathbb{E}[S|H]\,f'(x_F) + \lambda c_F = 0,
\]
\[
\frac{\partial\mathcal{L}}{\partial x_A} = f(x_F)\,\mathbb{E}[S|H]\,g'(x_A) + \lambda c_A = 0.
\]
Eliminating $\lambda$: $\frac{g(x_A)\,f'(x_F)}{c_F} = \frac{f(x_F)\,g'(x_A)}{c_A}$. Multiplying both sides by $\mathbb{E}[S|H]$ and rearranging gives equation~\eqref{eq:optimal}. \qed

\subsection{Bayesian Interpretation of \texorpdfstring{$P(H|F,A)$}{P(H|F,A)}}

From Proposition~\ref{prop:bayesian}, $P(H|F,A) \propto P(F|H)\,P(A|H)\,\pi(H)$. This shows the three sources of information about harm probability: (1) the prior $\pi(H)$, informed by domain base rates; (2) the failure likelihood $P(F|H)$, capturing whether the observed failure pattern is consistent with harm-producing failures; and (3) the automation likelihood $P(A|H)$, capturing whether the observed automation level is consistent with harm-producing deployments. As incident data accumulates--whether from AI incident databases, proprietary audit logs, or domain-specific reporting systems--$\pi(H)$ can be updated sequentially via Bayes' rule, providing an adaptive framework for ongoing risk monitoring across deployment domains. \qed

\end{document}